\documentclass{article}
\usepackage{graphicx}

\usepackage{nac_preprint}
\usepackage[utf8]{inputenc} 
\usepackage[T1]{fontenc}    
\usepackage{hyperref}       
\usepackage{url}            
\usepackage{booktabs}       
\usepackage{amsfonts}       
\usepackage{nicefrac}       
\usepackage{microtype}      

\usepackage{booktabs} 
\usepackage{subcaption}
\usepackage{amsmath}
\usepackage{amssymb}
\usepackage{cleveref}
\usepackage{mathtools}
\usepackage[toc,page]{appendix}
\usepackage{enumitem}
\usepackage{graphics}
\usepackage{graphicx}
\usepackage{xcolor, colortbl}
\usepackage[export]{adjustbox}
\usepackage{algorithm}
\usepackage{algorithmicx}
\usepackage{multicol}
\usepackage{algpseudocode}
\usepackage[inkscapelatex=false]{svg} 

\usepackage{amsthm}
\newtheorem{lemma}{Lemma}

\definecolor{LightCyan}{rgb}{0.88,1,1}
\definecolor{LightOrange}{rgb}{1,0.76,0.86}

\algnewcommand{\LineComment}[1]{\State \(//\) #1}
\algnewcommand{\RLineComment}[1]{\State \(\triangleright\) #1}
\newlength{\algrhswidth}
\usepackage{bm}
\usepackage{multirow}
 
\usepackage{wrapfig}

\usepackage{etoolbox}
\usepackage{tikz}
\usetikzlibrary{tikzmark}
\usetikzlibrary{calc}

\errorcontextlines\maxdimen

\newcommand{\ALGtikzmarkcolor}{black}
\newcommand{\ALGtikzmarkextraindent}{4pt}
\newcommand{\ALGtikzmarkverticaloffsetstart}{-.5ex}
\newcommand{\ALGtikzmarkverticaloffsetend}{-.5ex}
\makeatletter
\newcounter{ALG@tikzmark@tempcnta} 

\newcommand\ALG@tikzmark@start{%
    \global\let\ALG@tikzmark@last\ALG@tikzmark@starttext%
    \expandafter\edef\csname ALG@tikzmark@\theALG@nested\endcsname{\theALG@tikzmark@tempcnta}%
    \tikzmark{ALG@tikzmark@start@\csname ALG@tikzmark@\theALG@nested\endcsname}%
    \addtocounter{ALG@tikzmark@tempcnta}{1}%
}

\def\ALG@tikzmark@starttext{start}
\newcommand\ALG@tikzmark@end{%
    \ifx\ALG@tikzmark@last\ALG@tikzmark@starttext
    \else
        \tikzmark{ALG@tikzmark@end@\csname ALG@tikzmark@\theALG@nested\endcsname}%
        \tikz[overlay,remember picture] \draw[\ALGtikzmarkcolor] let \p{S}=($(pic cs:ALG@tikzmark@start@\csname ALG@tikzmark@\theALG@nested\endcsname)+(\ALGtikzmarkextraindent,\ALGtikzmarkverticaloffsetstart)$), \p{E}=($(pic cs:ALG@tikzmark@end@\csname ALG@tikzmark@\theALG@nested\endcsname)+(\ALGtikzmarkextraindent,\ALGtikzmarkverticaloffsetend)$) in (\x{S},\y{S})--(\x{S},\y{E});%
    \fi
    \gdef\ALG@tikzmark@last{end}%
}

\apptocmd{\ALG@beginblock}{\ALG@tikzmark@start}{}{\errmessage{failed to patch}}
\pretocmd{\ALG@endblock}{\ALG@tikzmark@end}{}{\errmessage{failed to patch}}
\makeatother

\title{Optimizing Neurorobot Policy under Limited Demonstration Data through Preference Regret}

\author{%
Viet Dung Nguyen\\
 Rochester Institute of Technology \\ 
\texttt{vn1747@rit.edu}
\And 
Yuhang Song \\
Advanced Micro Devices, Inc. \\
\texttt{sgyson10@liverpool.ac.uk}
\And 
Anh Nguyen \\
University of Liverpool \\
\texttt{anh.nguyen@liverpool.ac.uk}
\And 
Jamison Heard \\
Rochester Institute of Technology \\
\texttt{jrheee@rit.edu}
\And 
Reynold Bailey \\
Rochester Institute of Technology \\
\texttt{rjbvcs@rit.edu}
\AND 
Alexander Ororbia \\
Rochester Institute of Technology \\
\texttt{ago@cs.rit.edu}
}

\begin{document}

\setlength{\abovedisplayskip}{0.065cm}
\setlength{\belowdisplayskip}{0pt}

\maketitle

\begin{abstract}

Robot reinforcement learning from demonstrations (RLfD) assumes that expert data is abundant; this is usually unrealistic in the real world given data scarcity as well as high collection cost. Furthermore, imitation learning algorithms assume that the data is independently and identically distributed, which ultimately results in poorer performance as gradual errors emerge and compound within test-time trajectories. We address these issues by introducing the ``master your own expertise'' (MYOE) framework, a self-imitation framework that enables robotic agents to learn complex behaviors from limited demonstration data samples. 
Inspired by human perception and action, we propose and design what we call the \emph{queryable mixture-of-preferences state space model} (QMoP-SSM), which estimates the desired goal at every time step. These desired goals are used in computing the ``preference regret'', which is used to optimize the robot control policy. Our experiments demonstrate the robustness, adaptability, and out-of-sample performance of our agent compared to other state-of-the-art RLfD schemes. \\ 
The GitHub repository that supports this work can be found at: \hyperlink{https://github.com/NACLab/neurorobot-preference-regret-learning}{https://github.com/NACLab/neurorobot-preference-regret-learning}. 

\keywords{Active inference \and Free energy principle \and Neurorobotics \and Cognitive control \and Generative world models \and Reinforcement learning}
\end{abstract}

\section{Introduction}
\label{sec:intro}

Typically, popular RLfD algorithms treat expert data as independent and identically distributed (i.i.d.), training in a hybrid manner between supervised learning~\cite{Pomerleau1988ALVINN, Ross2010ReductionImitationLearning, Ross2011DAgger} and reinforcement learning (RL)~\cite{Fujimoto2021TD3BC, Mandlekar2020HierarchicalImitationLearning} (and adversarial imitation learning~\cite{Ho2016GAIL, Giammarino2024LAIfO}). However, imitation learning (IL) and behavior cloning (BC) suffer from a problem known as cascading errors~\cite{Ross2010ReductionImitationLearning, Bagnell2015InvitationImitationLearning, Codevilla2019BehaviorCloningLimitDrivingCar, Chang2021CovariateShiftIL, Seo2024CovariateShiftBC, Nguyen2025SRAIF}, where small errors accumulate over time in actual trajectories. This causes the robot to deviate from demonstrated examples, leading to policy collapse during online execution. Furthermore, less attention has been given to online RL, in the context of neurorobotics \cite{chiel1997brain, Nguyen2025SRAIF}, where 
the robot learns through real-time interaction, given only a limited number of demonstrations.


In this work, we overcome the problem of cascading errors and optimize the robot policy in the constrained expert data RLfD setting via the adaptation of a generative model that dynamically produces ``preferred'' state trajectories; this results in the learning of a policy through ``preference regret'' minimization. Specifically, our contributions are as follows.   
\textbf{1)} We leverage model-based reinforcement learning and active inference (AIF) process theory to model perception and action, enabling online learning under limited expert demonstrations. 
\textbf{2)} We introduce the queryable mixture-of-preferences state space model (QMoP-SSM), a goal / preference predictor that estimates future desired trajectories to guide policy learning. 
\textbf{3)} We propose a novel policy / behavior learning method that makes use of this future desired trajectory by computing what we call the ``preference regret''. 
We refer to our approach as the ``master your own expertise'' (MYOE) framework. Finally, \textbf{4)} we provide empirical results that shows that our 
framework outperforms state-of-the-art imitation-learning-augmented model-free and model-based RL baselines.

\section{Related Work}
\label{sec:review}

\begin{figure}[t]
    \centering
    \vspace{0.575mm}
    \includegraphics[width=0.6\linewidth]{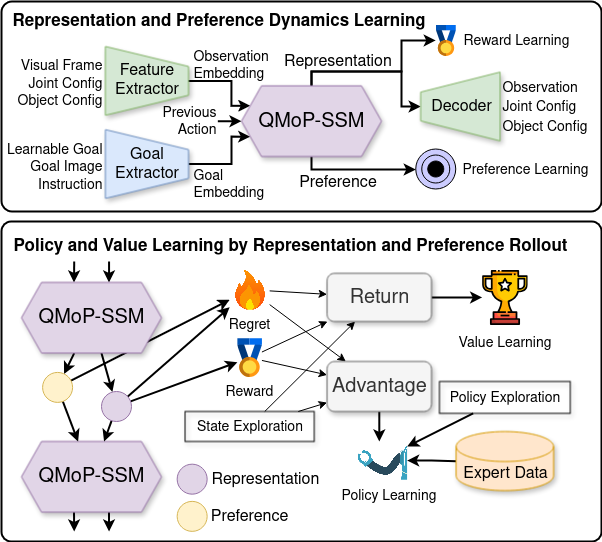}
    \caption{\textbf{Our Proposed Agent Framework.} The agent learns internal representations via encoders, the QMoP-SSM, and a decoder. Imagined future states and preferences guide both policy learning and value estimation.}
    \label{fig:overview}
    \vspace{-6mm}
\end{figure}

\noindent 
\textbf{Perception Modeling.} This line of work focuses on biologically-inspired design patterns that aim to provide an agent with an understanding of its niche, or ``lived'' world~\cite{Conant1970EverythingMustBeModel, Friston2013LifeAsWeKnowIt, Ororbia2024MortalComputation}. This concept has been studied across different research domains such as model-based reinforcement learning~\cite{Sutton2018RLBook}, active inference~\cite{Parr2022AIFFEPBook, Ororbia2022BackpropFreeRL}, and variational inference~\cite{Kingma2014VAE}. Central to these approaches is the ``generative world model'', which encapsulates two key concepts: 
\textbf{1)} inferring internal beliefs from observations, and 
\textbf{2)} predicting future observations given those beliefs. To achieve this, world models and recurrent state space models aim to minimize the surprise with respect to model estimations as well as maximize the observation prediction accuracy~\cite{Ha2018WorldModel, Hafner2020DreamerV1}, an idea often referred to as the free energy principle~\cite{Friston2010FEP, Friston2017AIFProcessTheory}:
\begin{equation}
\begin{aligned}
    F = \text{D}_\text{KL}[ q(s_t | o_t) \parallel p (s_t | s_{t-1}, a_{t-1}) ] - \mathbb{E}_{q(s_t)} [ \ln q(o_t | s_t) ]
\end{aligned}
\end{equation}
where $o_t$, $s_t$, and $a_t$ are the observation, hidden state, and action / control taken at time step $t$, respectively. Similar to minimizing the evidence lower bound (ELBO)~\cite{Kingma2014VAE}, the first term minimizes the Kullback–Leibler (KL) divergence between the approximated posterior and the prior over the hidden state 
whereas the second term maximizes the system's prediction accuracy~\cite{Friston2017AIFProcessTheory}. Later work integrates recurrence into the planning process, e.g., using a ``state space model''~\cite{Ha2018WorldModel, Hansen2024TDMPC2}, which can 
enhance an agent's ability to form internal beliefs through a perception model. 
In AIF studies, some agents solve tasks by employing abstract representations of desired goals, a notion referred to as learning prior preferences. In this work, we use the term ``preference'' to denote the learned / predicted trajectory of prior preferences. 

\noindent 
\textbf{Imitation Learning.} Imitation learning (IL) methodology originated from ALVINN, an early self-driving system that was designed to mimic human driving behavior~\cite{Pomerleau1988ALVINN}. This led to the development of behavioral cloning (BC), which frames IL as a supervised learning problem~\cite{Pomerleau1991OriginalBehaviorCloning},  where observations are inputs and expert actions are labels used to train a ``cloned'' policy. However, BC suffers from instability due to the problem of ``cascading errors'', where small planning mistakes accumulate during real-world execution, driving the agent into trajectory distributions that have not been observed in the training data~\cite{Ross2010ReductionImitationLearning, Bagnell2015InvitationImitationLearning}. This issue, also referred to as ``covariate shift''~\cite{Zare2024ImitationLearningSurvey} in the underlying dataset distribution, has been mitigated through algorithms such as the dataset aggregation (DAgger) method~\cite{Ross2011DAgger}, inverse reinforcement learning~\cite{Zare2024ImitationLearningSurvey}, generative adversarial methods~\cite{Ho2016GAIL, Giammarino2024LAIfO}, and self-imitation learning~\cite{Ferret2021SAIL}. 
However, relatively little work has explored scenarios where robotic agents must learn from a \textbf{limited number of demonstrations}, a more practical setting with a lower cost in terms of time, effort, and computational resources. In this work, we tackle this challenge by leveraging the concept of perception modeling as well as AIF process theory to construct agents that learn from limited expert demonstrations while mitigating the impact of cascading errors in the context of online learning.

\section{Robot Learning through Preference Regret}
\label{sec:method} 

Our agent operates on partially-observable Markov decision processes (POMDPs)~\cite{Sutton2018RLBook} where the observation space includes visual sensory inputs, proprioceptive states, object states, and desired goals. Agents must learn in real-time through environmental interactions and are only provided with a small number (five) episodes of expert demonstrations. To solve robotic tasks, our approach leverages perception modeling, enabling the ability to deduce both future observations and goal trajectories; see Section~\ref{sec:qmop}. Based on the derived future representations and preferences, we train our system's behavioral policy by minimizing a term called ``preference regret''. This term is computed from the resulting goal trajectory from our proposed perception model; it is built on the advantage estimation of proximal policy optimization (PPO) frameworks~\cite{Sutton2018RLBook, Schulman2015GeneralGAELambda} and the behavioral learning element of AIF process theory~\cite{Friston2015Epistemic, Friston2017Curiosity, Millidge2021WhenceEFE}; see Section~\ref{sec:behavior}. 
The training process can be viewed in Figure~\ref{fig:overview}.

\begin{figure}[!t]
    \centering
    \vspace{0.2mm}
    \includegraphics[width=0.7\linewidth]{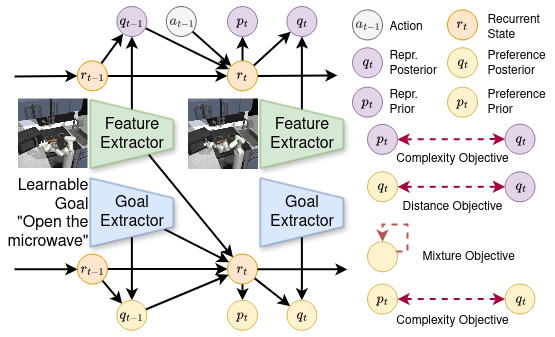}
    \caption{\textbf{QMoP-SSM Learning Architecture.} QMoP separates latent learning into two interconnected parallel processes: ``representation'' and ``preference'' learning. While the model predicts future latent given previous actions, future preference state trajectories are guided by provided and learned goals.
    }
    \label{fig:qmop}
    \vspace{-6mm}
\end{figure}

\subsection{The Queryable Mixture-of-Preferences State Space Model}
\label{sec:qmop}

Motivated by efforts in active inference and prior preference learning~\cite{Shin2022PriorPreferenceLearning, Sajid2022PriorPreferenceLearning}, we construct an agent that predicts future preference trajectories based on its own learned goal representations. Consequently, agent policy adaptation 
is influenced by both ``imagined'' future preferences as well as the expert data, which mitigates the cascading errors effect. 
Concretely, provided any goal, (e.g., language instructions, goal images / states, or learnable goal vectors), the QMoP-SSM ``queries'' the agent's goals and then predicts a roll-out of future (latent) representations and (prior) preferences. These trajectories then drive the agent's policy toward desired goal states while maximizing the utility values contained within expert data as well as optimizing final rewards; see Figures~\ref{fig:overview} and~\ref{fig:qmop} for details of the agent and its learning process.

\noindent 
\textbf{Formulation.} Our framework integrates world model building~\cite{Ha2018WorldModel, Buesing2018RLGenerativeModel, Hafner2019Planet}, (recurrent) state space models~\cite{Doerr2018PRSSM, kalman1960filter, Krishnan2015DeepKalmanFilters, Hafner2020DreamerV1}, and active inference~\cite{Friston2017AIFProcessTheory}, yielding an agent model that continuously predicts future states to guide its ability to plan actions. Our perception model, QMoP-SSM, which also predicts future ``preferences'' (or representations of optimized states), is expressed formally as follows:
\begin{equation}\label{eqn:wm}
\begin{aligned}
    \text{Representation Posterior: } &q(s^o_t | o_t, h_t)\\
    \text{Representation Prior: } &p(s^o_t | s^o_{t-1}, h_{t-1}, a_{t-1})\\
    \text{Representation Likelihood: } &p(o_t|s^o_t, h_t)\\
    \text{Representation Reward: } &p(r_t | s^o_t, h_t)\\
    \text{Preference Posterior: } &q(s^p_t | o_t, h_t)\\
    \text{Preference Prior: } &p(s^p_t | s^p_{t-1}, g^\theta_{t-1}, h_{t-1})\\
    \text{Preference Reward: } &p(r_t | s^p_t, h_t).
\end{aligned}
\end{equation}
where $o_t$ is the observation (at time $t$), $h_t$ is the recurrent hidden state, $s^o_t$ is the world (representation) state, $s^p_t$ is the preference state, and $r_t$ is the (extrinsic) reward. 
$g^\theta$ is the learnable goal, which is optimized via preference learning in our perception model. In this work, we use the term $g^\theta$ to denote the tensor that encapsulates all provided goal modalities. These modalities may include (but are not limited to) learnable tensors, natural language instructions, and goal/target images. 
Finally, in line with deep AIF~\cite{Mazzaglia2022DeepAIFSurvey} and recurrent state space modeling~\cite{Ha2018WorldModel, Hafner2020DreamerV1} methodology, we parameterize the representation and preference modules of our agent with artificial neural networks (ANNs) that are conditioned on a gated recurrent unit (GRU) recurrent network's hidden neural states~\cite{chung2014gru}. Figure~\ref{fig:qmop} illustrates the inference and learning components of our QMoP-SSM.

The agent first learns the representation model by continuously adjusting its state $s^o$ so as to minimize the surprisal between the approximated posterior $q(s^o_t | o_t, h_t)$ and the prior $p(s^o_t | s^o_{t-1}, h_{t-1}, a_{t-1})$ while furthermore maximizing its observation prediction accuracy from the world representation likelihood $p(o_t | s^o_t, h_t)$. This process resembles maximizing the ELBO~\cite{Kingma2014VAE, Hoffman2013StochasticVariationalInference} as well as minimizing what is known as the marginal free energy~\cite{Parr2019MarginalMessagePassing, Friston2016AIFandLearning, Friston2017AIFProcessTheory, Smith2022ActiveInferenceTutorial}. 
Next, our system learns to infer the preference states by predicting the successful trajectories, which are known from the expert demonstrations or collected from the agent's interaction with the environment. 
To achieve this, we minimize the distance between the preference state distribution and the actual representation state distribution. Formally, given that the perception model QMoP-SSM is parameterized by $\theta$, the free energy (learning) objective can be described as follows: 
\begin{subequations}
\begin{align}
&\underset{\theta}{\text{arg} \min}\ \mathcal{F}_t(\theta) = \mathcal{F}_o + \mathcal{F}_{o,\text{KL}} + \mathcal{F}_r + \mathcal{F}_{p,\text{KL}} + \mathcal{F}_{\text{dist}}\\
\mathcal{F}_o &= - \mathbb{E}_{q_\theta(s^o_t)}[ \ln ( p_\theta( o_t | s^o_t, h_t) ] \label{eqn:wm-acc}\\
\mathcal{F}_{o,\text{KL}} &= \text{D}_\text{KL} [ q_\theta(s^o_t | o_t, h_t) \parallel  p_\theta(s^o_t | s^o_{t-1}, a_{t-1}, h_{t-1}) ]\label{eqn:wm-compl} \\
\mathcal{F}_r &= - \mathbb{E}_{q_\theta(s^o_t)}[ \ln ( p_\theta( r_t | s^o_t, h_t) ] \label{eqn:wm-rew}\\
\mathcal{F}_{p,\text{KL}} &=  \text{D}_\text{KL} [ q(s^p_t | o_t, h_t) \parallel  p (s^p_t | s^p_{t-1}, g^\theta, h_{t-1}) ] \odot m_t \label{eqn:pref-compl} \\
\mathcal{F}_{\text{dist}} &= \mathcal{D} [ q(s^p_t | o_t, h_t) \parallel q(s^o_t | o_t, h_t) ] \odot m_t . \label{eqn:pref-dist}
\end{align}
\end{subequations}
In the above, the terms in Equations~\ref{eqn:wm-acc} and~\ref{eqn:wm-rew} seek to maximize prediction accuracy for observations and rewards. 
The terms in Equations~\ref{eqn:wm-compl} and~\ref{eqn:pref-compl} minimize the KL divergence, (i.e., the amount of surprise), between the approximate posterior and the prior distribution. Finally, the term in Equation~\ref{eqn:pref-dist} aims to minimize the distance between the posterior over the states of the preference and the representation models. 
This helps the future predicted preference state distributions align with the successful state distributions of the representation model. Choices for distance function $\mathcal{D}$ can include 
the mean (squared) Euclidean distance, negative cosine similarity, and Manhattan distance. 
Note that the (binary) mask $m_t$ represents whether the state at $t$ is part of a successful trajectory (or not) and is element-wise multiplied ($\odot$) with the corresponding preference objectives in order to constrain the preference model to only predict the next goal states. Finally, when learning how to predict rewards, the preference reward objective $-\mathbb{E}_{q_\theta(s^p_t)}[ \ln ( p_\theta( r_t | s^p_t, h_t) ]$ can be omitted since the weights may be shared among the reward models $p_\theta(r_t|s^o_t, h_t)$ and $p_\theta(r_t|s^p_t, h_t)$ (i.e., the preference state aims to estimate the representation state within successful trajectories).


\noindent 
\textbf{Mixture of Preferences.} As discussed earlier, our proposed state space model (SSM) predicts future preference trajectories that are guided by goal information. However, multiple feasible trajectories can lead to the same goal (state); this introduces noise into the preference learning process, potentially causing mode collapse \cite{Kossale2022ModeCollapseGAN}. To address this challenge, we extend the SSM to operate as a mixture of stochastic distributions for both the prior and posterior. Specifically, given a mixture of size $M \in \mathbb{N}$, we then estimate a collection of preference state distributions $\{p^i_\theta(s^p_t | s^p_{t-1}, q^\theta, h_{t-1})\}_{i=1}^M$. This formulation provides three key advantages: 
\textbf{1)} varied-goal environments -- an individual mixture component can specialize for a specific trajectory type, facilitating comprehensive coverage of all preferred behaviors; 
\textbf{2)} multiple preference trajectories expand the desired behaviors' state space, which prevents mode collapse in the estimated preference distributions; and  
\textbf{3)} a diverse mixture of preferences yields high entropy -- this promotes exploration as predicted future preferences are used as the learning signal to drive policy adaptation. 
Note that this also aligns with past RL effort that has demonstrated that entropy maximization enhances exploration~\cite{Haarnoja2018SAC,Haarnoja2018SACAPP, Barto2013IntrinsicMotivation, Oudeyer2007IntrinsicMotivation}.

Formally, given a mixture of $M$ predicted preference distributions $\{p^i_\theta(s^p_t | s^p_{t-1}, q^\theta, h_{t-1})\}_{i=1}^M$, we designate / choose one by computing a weighted linear combination over all of the distributions' parameters within the mixture:
\begin{equation}
\begin{aligned}
    \bar{p}_\theta(s^p_t) = \sum_{i=1}^{M} { \left[ \sigma( \mathbf{z} ) \odot s^{p,i}_t \right] }, \quad \mathbf{z} = W \big({q^\theta}\big)^\top + b 
\end{aligned}
\end{equation}
where $s^{p,i}_t \sim p^i_\theta(s^p_t | s^p_{t-1}, q^\theta, h_{t-1})$ and 
$q^\theta \in \mathbb{R}^{d_q}$ is the (learnable) encoded goal / query, $W \in \mathbb{R}^{M \times d_q}$ and $b \in \mathbb{R}^{M \times 1}$ produce the logit $\mathbf{z}$, and $\sigma(\mathbf{z})$ is the softmax function applied to the mapped logit component so as to produce the mixture probability. For each preference trajectory, we seek high modality coverage, avoidance of mode collapse, and exploratory signals for downstream policy learning. To achieve these objectives, we maximize the entropy of the predicted mixture of preferences in the following manner:
\begin{equation}
\begin{aligned}
    \underset{\theta}{\text{arg} \max}\ \mathcal{L}_t(\theta) = \mathbf{H} \left[ \bar{p}_\theta(s^p_t) \right] - \alpha \parallel \bar{s}^p_t \parallel_2, \quad \bar{s}^p_t \sim \bar{p}_\theta(s^p_t)
\end{aligned}
\end{equation}
where $\mathbf{H} \left[ \bar{p}_\theta(s^p_t) \right]$ is the entropy of the predicted preference distribution and $\parallel \bar{s}^p_t \parallel_2$ is the L2 regularization term used to stabilize the preference prediction. Finally, $\alpha > 0$ is the coefficient to stabilize the learning objective, e.g., $\alpha = 0.1$. This last objective enlarges preference mode coverage while maintaining the smoothness of the combined distribution.


\subsection{MYOE: Behavioral Learning through Preference Regret}
\label{sec:behavior}

In order to enable a form of effective behavior learning that leverages expert demonstrations while further still maximizing the final rewards obtained via environmental interactions, we employ a preference-guided policy optimization framework based on regret minimization principles \cite{Zinkevich2007RegretMinimization}. As a result, we integrate both the predicted representation $p(s^o)$ and preference $p(s^p)$ distributions produced by the QMoP-SSM to guide the adaptation of our system's policy. Specifically, to train the behavioral model, we utilize policy gradients~\cite{Sutton1999PolicyGrad, Ccatal2019BayesianPolicyAIF, Millidge2020VariationalPolicyGradients}, actor-critic~\cite{Konda2003ActorCritic, Sutton2018RLBook, Haarnoja2018SAC, Hafner2020DreamerV1, Hansen2024TDMPC2}, and generalized advantage estimation (GAE-$\lambda$)~\cite{Schulman2015GeneralGAELambda}, leveraging state-action-value estimation to nudge the agent toward trajectories with higher rewards. Formally, we consider the following modules:
\begin{equation}
\begin{aligned}
    \text{Policy: } &\pi_\psi(a_\tau | s^o_\tau, s^p_\tau, h_\tau)\\
    \text{Value: } V_\nu(v_\tau | s^o_\tau, h_\tau);\ &\text{Target Value: } V^{'}_\nu(v_\tau | s^o_\tau, h_\tau). \nonumber 
\end{aligned}
\end{equation}
where $\psi$, $\nu$, and $\nu'$ denote the parameters of the policy network, the value network, and the target value network, respectively. Given the future planning step $\tau$, imagination horizon $H$, and the learned goal $g^\theta$, the agent works to predict sequences of future representations of observation and preference state distributions as follows:
\begin{equation}
\{ ( p_\theta(s^o_\tau | a_{\tau - 1}, h_{t-1}), \  \bar{p}_\theta(s^p_\tau | s^p_{\tau - 1}, g^\theta, h_{t-1}) ) \}_{\tau=1}^H
\end{equation}
where $a_\tau \sim \pi_\psi(a_\tau | s^o_\tau, s^p_\tau, h_\tau)$ represents actions sampled from the current policy and $\bar{p}_\theta(s^p_\tau | s^p_{\tau - 1}, g^\theta, h_{t-1}) )$ denotes the QMoP-SSM's mixture preference distribution. This formulation allows downstream behavior learning to benefit from both environmental dynamics and the learned preferences.

\begin{figure*}[!t]
\centering
\begin{subfigure}{0.11\linewidth}
    \includegraphics[width=\linewidth]{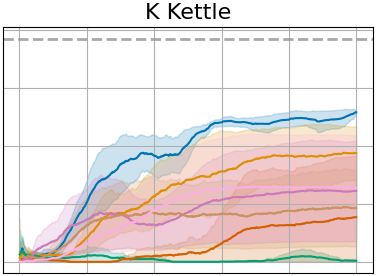}
\end{subfigure}
\begin{subfigure}{0.11\linewidth}
    \includegraphics[width=\linewidth]{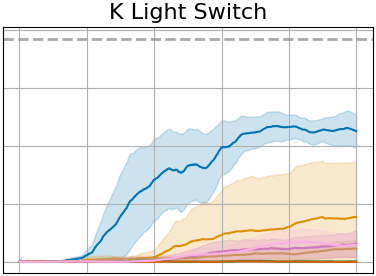}
\end{subfigure}
\begin{subfigure}{0.11\linewidth}
    \includegraphics[width=\linewidth]{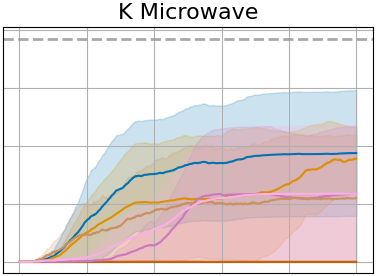}
\end{subfigure}
\begin{subfigure}{0.11\linewidth}
    \includegraphics[width=\linewidth]{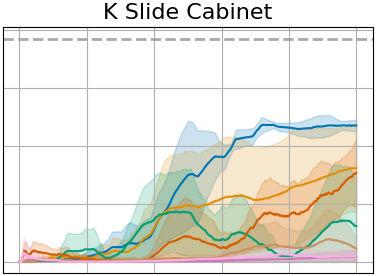}
\end{subfigure}
\begin{subfigure}{0.11\linewidth}
    \includegraphics[width=\linewidth]{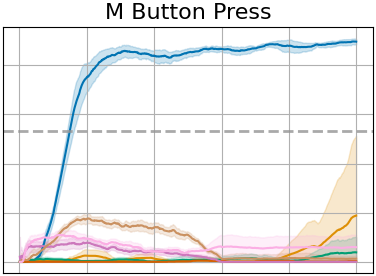}
\end{subfigure}
\begin{subfigure}{0.11\linewidth}
    \includegraphics[width=\linewidth]{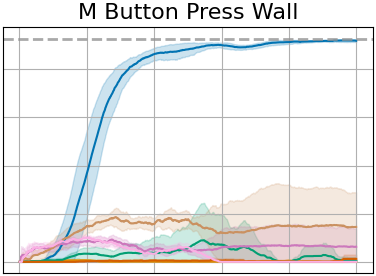}
\end{subfigure}
\begin{subfigure}{0.11\linewidth}
    \includegraphics[width=\linewidth]{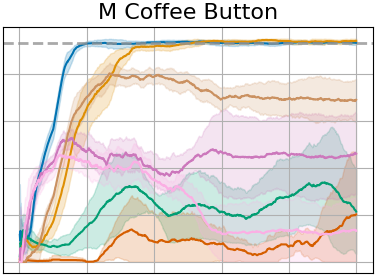}
\end{subfigure}
\begin{subfigure}{0.11\linewidth}
    \includegraphics[width=\linewidth]{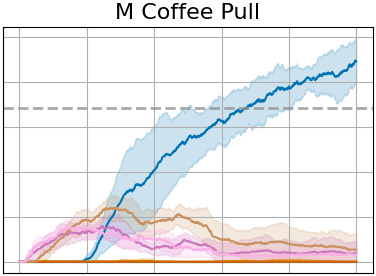}
\end{subfigure}
\begin{subfigure}{0.11\linewidth}
    \includegraphics[width=\linewidth]{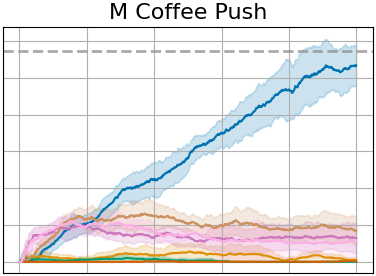}
\end{subfigure}
\begin{subfigure}{0.11\linewidth}
    \includegraphics[width=\linewidth]{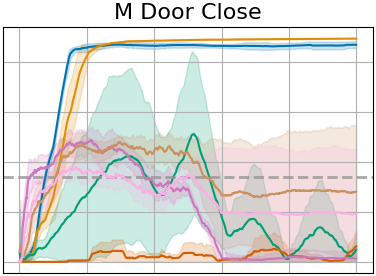}
\end{subfigure}
\begin{subfigure}{0.11\linewidth}
    \includegraphics[width=\linewidth]{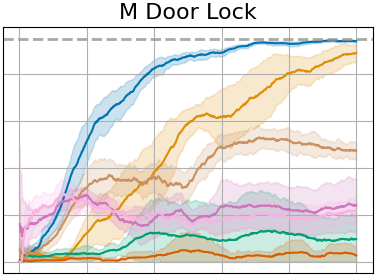}
\end{subfigure}
\begin{subfigure}{0.11\linewidth}
    \includegraphics[width=\linewidth]{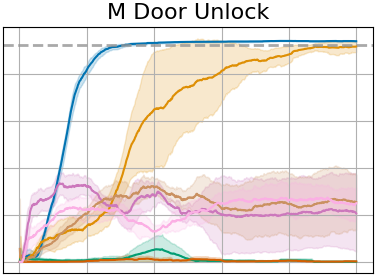}
\end{subfigure}
\begin{subfigure}{0.11\linewidth}
    \includegraphics[width=\linewidth]{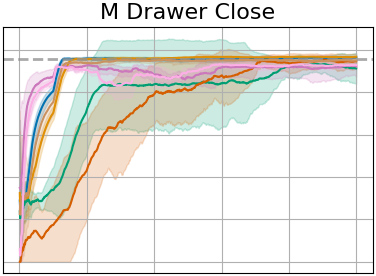}
\end{subfigure}
\begin{subfigure}{0.11\linewidth}
    \includegraphics[width=\linewidth]{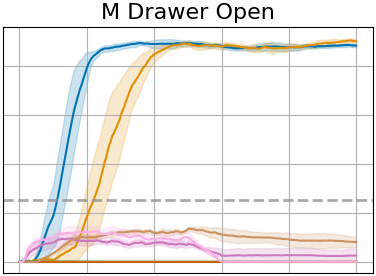}
\end{subfigure}
\begin{subfigure}{0.11\linewidth}
    \includegraphics[width=\linewidth]{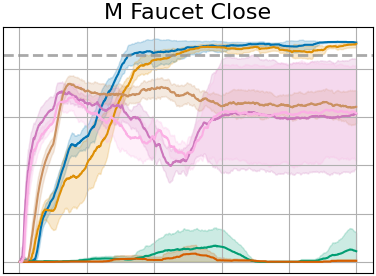}
\end{subfigure}
\begin{subfigure}{0.11\linewidth}
    \includegraphics[width=\linewidth]{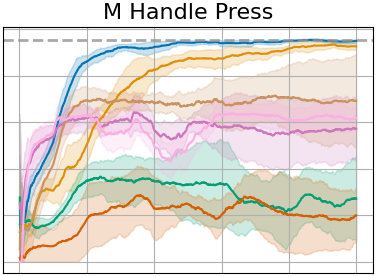}
\end{subfigure}
\begin{subfigure}{0.11\linewidth}
    \includegraphics[width=\linewidth]{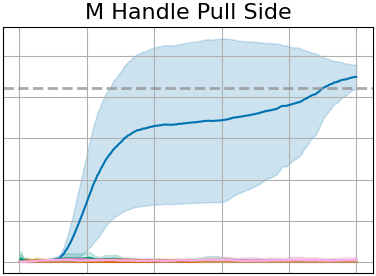}
\end{subfigure}
\begin{subfigure}{0.11\linewidth}
    \includegraphics[width=\linewidth]{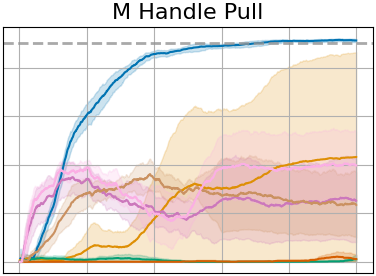}
\end{subfigure}
\begin{subfigure}{0.11\linewidth}
    \includegraphics[width=\linewidth]{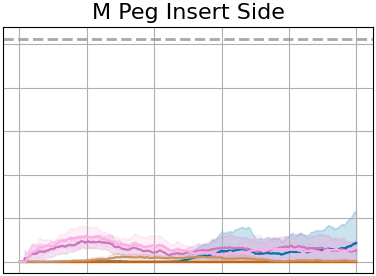}
\end{subfigure}
\begin{subfigure}{0.11\linewidth}
    \includegraphics[width=\linewidth]{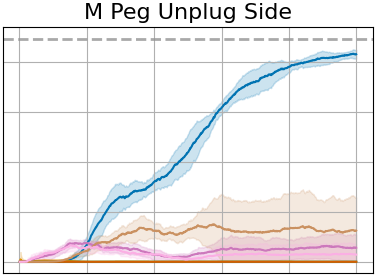}
\end{subfigure}
\begin{subfigure}{0.11\linewidth}
    \includegraphics[width=\linewidth]{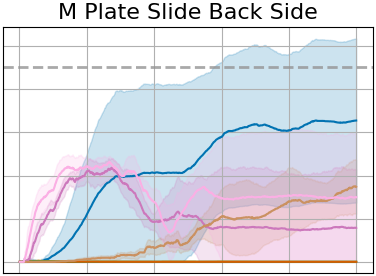}
\end{subfigure}
\begin{subfigure}{0.11\linewidth}
    \includegraphics[width=\linewidth]{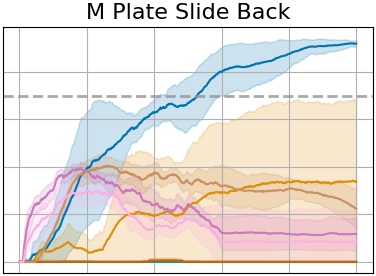}
\end{subfigure}
\begin{subfigure}{0.11\linewidth}
    \includegraphics[width=\linewidth]{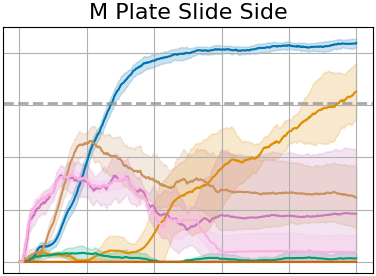}
\end{subfigure}
\begin{subfigure}{0.11\linewidth}
    \includegraphics[width=\linewidth]{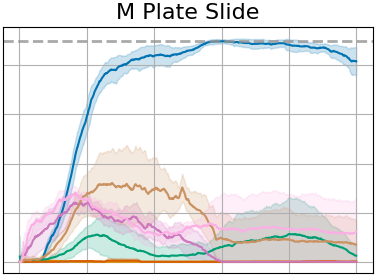}
\end{subfigure}
\begin{subfigure}{0.11\linewidth}
    \includegraphics[width=\linewidth]{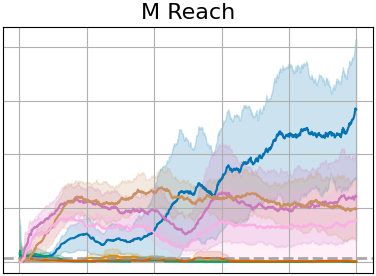}
\end{subfigure}
\begin{subfigure}{0.11\linewidth}
    \includegraphics[width=\linewidth]{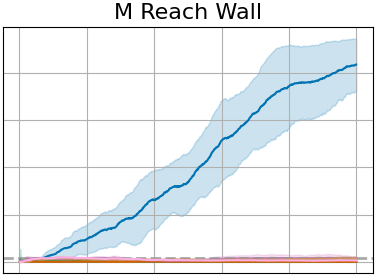}
\end{subfigure}
\begin{subfigure}{0.11\linewidth}
    \includegraphics[width=\linewidth]{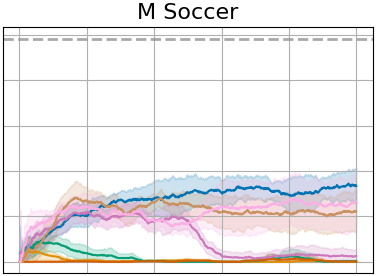}
\end{subfigure}
\begin{subfigure}{0.11\linewidth}
    \includegraphics[width=\linewidth]{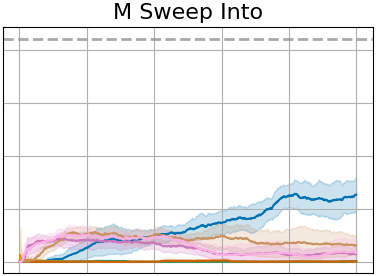}
\end{subfigure}
\begin{subfigure}{0.11\linewidth}
    \includegraphics[width=\linewidth]{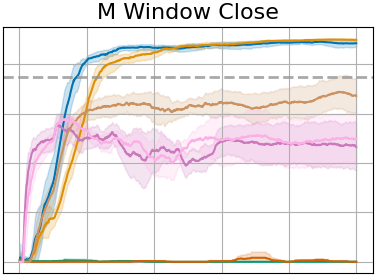}
\end{subfigure}
\begin{subfigure}{0.11\linewidth}
    \includegraphics[width=\linewidth]{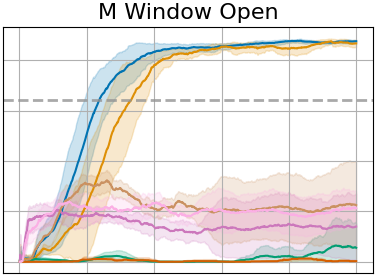}
\end{subfigure}
\begin{subfigure}{0.11\linewidth}
    \includegraphics[width=\linewidth]{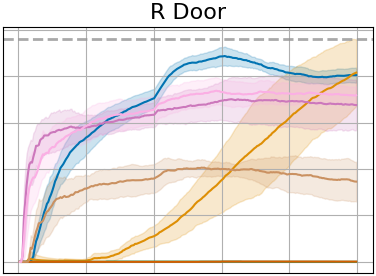}
\end{subfigure}
\begin{subfigure}{0.11\linewidth}
    \includegraphics[width=\linewidth]{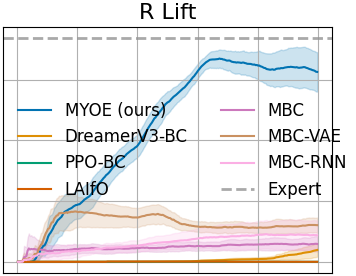}
\end{subfigure}
\caption{Cumulative reward ($y$-axis) across $1$ million interaction/training steps ($x$-axis) for different agents.}\label{fig:cumrew}
\vspace{-6mm}
\end{figure*}

\noindent 
\textbf{The Preference-Regret.} To effectively leverage the preference trajectories yielded by our QMoP-SSM, we introduce a novel intrinsic reward augmentation formulation to compute the target Q-value, inspired by the concept of ``regret minimization'' in multi-armed bandit problems~\cite{Auer2002FiniteTimeAnalysisBandit, Sutton2018RLBook, Lattimore2020BanditAlgorithmsBook}. Classical regret quantifies the difference between rewards obtained by an agent and those retrieved from optimal actions that could have been executed to interact with the environment. In our context, we extend the concept of ``regret'' by defining ``preference regret'' as the difference between actual rewards and rewards estimated using preference state trajectories. We also incorporate intrinsic motivation~\cite{Barto2013IntrinsicMotivation, Deci1985IntrinsicMotivationBook} into our framing based on the agent's state distribution entropy; this is similar to AIF process theory where the agent seeks to reduce the epistemic uncertainty within future estimated state distributions~\cite{Friston2015Epistemic, Millidge2021WhenceEFE, Friston2017Curiosity, Schwartenbeck2019Curiosity}. Thus, our agent's intrinsic reward, in line with expected free energy~\cite{Friston2015Epistemic, Friston2013LifeAsWeKnowIt, Friston2017AIFProcessTheory}, is: 
\begin{equation} 
\begin{aligned}
    \mathbf{R}_\tau = &\underbrace{r^o_\tau}_\text{reward} - \underbrace{\left( r^p_\tau - r^o_\tau \right)}_\text{preference regret} + \alpha\underbrace{\mathbf{H} \left[ p(s^o_\tau | s^o_{\tau-1}, a_{\tau-1}, h_{\tau-1}) \right]}_\text{representation state entropy} \label{eqn:intrew}
\end{aligned}
\end{equation}
where $r^o_\tau \sim p_\theta(r_\tau | s^o_\tau, h_\tau)$ and $r^p_\tau \sim p_\theta(r_\tau | s^p_\tau, h_\tau)$ represent predicted rewards conditioned on estimated future representations and preferences distributions, respectively. $a_{\tau-1} \sim \pi_\psi(a_{\tau - 1} | s^o_{\tau-1}, s^p_{\tau-1}, h_{\tau-1})$ denotes the policy's sampled actions. $\alpha \approx 0.0003$ is the scaling coefficient for stability, with its value chosen according to the soft actor-critic algorithm convention~\cite{Haarnoja2018SAC, Haarnoja2018SACAPP}. The ``preference regret'' $(r^p_\tau - r^o_\tau)$ penalizes deviations from preferred trajectories while preserving online task performance. Therefore, the behavioral model learns to solve tasks while still following the preferred trajectories guided by the learned goal(s).

Following typical temporal difference (TD) learning frameworks~\cite{Schulman2015GeneralGAELambda, Sutton2018RLBook, Hafner2025DreamerV3}, we train the value function $V_\nu$ to estimate the expected discounted return $\mathcal{G_\tau}$ (which is computed from discounted, augmented rewards $\mathbf{R_\tau}$). In order to achieve this, we first compute the TD error as follows:
\begin{equation}\label{eqn:delta}
\begin{aligned}
    \delta_\tau = \mathbf{R}_\tau + \gamma V_\nu(v_{\tau+1} | s^o_{\tau+1}, h_{\tau+1}) - V_\nu(v_{\tau} | s^o_{\tau}, h_{\tau})
\end{aligned}
\end{equation}
where $\gamma \in [0, 1]$, the discount factor, determines the relative importance of future rewards. Finally, to obtain stable low-variance policy estimates, we employ GAE-$\lambda$~\cite{Schulman2015GeneralGAELambda} to compute the advantage value $\mathcal{A_\tau}$ and the target returns $\mathcal{G_\tau}$ as below:
\begin{equation}\label{eqn:adv}
\begin{aligned}
    \mathcal{A}_\tau = \sum_{\tau=1}^H{ (\gamma \lambda)^\tau \delta_\tau };\ \mathcal{G}_\tau = \mathcal{A}_\tau + V_\nu(v_\tau | s^o_\tau, h_\tau)
\end{aligned}
\end{equation}
where $\lambda$ controls the bias-variance trade-off in advantage estimation. When $\lambda = 0$, the estimates reduce to a one-step TD method (low variance, high bias) whereas $\lambda = 1$ approaches Monte Carlo estimation (high variance, low bias). This term weights the temporal difference errors $\delta_\tau$ across the trajectory, enhancing optimization stability.

\noindent 
\textbf{Value and Policy Learning.} After computing the target $\lambda$-weighted return value $\mathcal{G}_\tau$ from Equations~\ref{eqn:intrew},~\ref{eqn:delta}, and~\ref{eqn:adv}, we train the value estimator $f_\nu$ in the following manner:
\begin{subequations}
\begin{align}
    \underset{\nu}{\text{arg} \min}\ \mathcal{L}_\tau(\nu) = \parallel V_\nu(v_\tau | s^o_\tau, h_\tau) -  \text{sg}(\mathcal{G}_\tau)\parallel_2 \label{eqn:val-main} \\
    + \alpha \parallel V_\nu(v_\tau | s^o_\tau, h_\tau) - \text{sg}\bigl( V^{'}_\nu(v_\tau|s^o_\tau, h_\tau) \bigr)\parallel_2 . \label{eqn:val-norm}
\end{align}
\end{subequations}
``sg'' denotes the stop gradient operator 
and scalar $\alpha$ induces 
training stability. Equation~\ref{eqn:val-main} minimizes the mean squared error between the predicted value and the $\lambda$-return;  Equation~\ref{eqn:val-norm} reduces the weight gap between the value and target networks (to improve training stability~\cite{Hafner2020DreamerV2,Hafner2025DreamerV3}). Target network $V'_\nu$ weights are updated (per step) using the source value network $V_\nu$ via an exponential moving average~\cite{Lillicrap2015DDPG}.

Next, policy learning is performed through advantage value maximization via policy gradients as follows: 
\begin{subequations}
\begin{align}
    \underset{\psi}{\text{arg} \max}\ \mathcal{L}_\tau(\psi) &= \mathcal{L}_{\text{adv}} + \mathcal{L}_{\text{ac}} + \mathcal{L}_{\text{exp}}\\
    \mathcal{L}_{\text{adv}} = &\ln \pi_\psi(a_\tau | s^o_\tau, s^p_\tau, h_\tau)\ \text{sg} (\mathcal{A}_\tau ) \label{eqn:pol-adv} \\
    \mathcal{L}_{\text{ac}} = &\alpha \mathbf{H} \left[ \pi_\psi(a_\tau | s^o_\tau, s^p_\tau, h_\tau) \right] \label{eqn:pol-ent} \\
    \mathcal{L}_{\text{exp}} = &-\beta m_\tau \ \odot \parallel a_\tau - a^{*}_\tau \parallel_2 \label{eqn:pol-ref}
\end{align}
\end{subequations}
where $a_\tau \sim \pi_\psi(a_\tau | s^o_\tau, s^p_\tau, h_\tau)$ is the imagined action(s) and $a^{*}_\tau$ is the actual action(s) taken. $m_\tau$ is a mask that specifies whether the action $a^{*}$ is from the expert or not. The (stability) coefficient values are $\alpha = 0.0003$ and $\beta = 0.5$. 
In addition to advantage (value) maximization -- of the term $\mathcal{L}_{\text{adv}}$ in Equation~\ref{eqn:pol-adv} -- our approach draws inspiration from the fusion of model-free RL, BC~\cite{Fujimoto2021TD3BC}, and the actor refreshing mechanism used in active predictive coding~\cite{Ororbia2023ActivePredictiveCoding}. Specifically, the term $\mathcal{L}_{\text{exp}}$ (Equation~\ref{eqn:pol-ref}) provides the agent with a small reward when its actions follow the expert demonstrations. Finally, the term $\mathcal{L}_{\text{ac}}$ (Equation~\ref{eqn:pol-ent}) 
denotes the rewards supplied to the agent for entering states where its policy incurs high entropy. This quantity is often referred to as ``curiosity''~\cite{Friston2017Curiosity} 
and has been widely used to enhance exploration. 

\newpage


\noindent 
\textbf{The Effect of Preference Regret.} 

\begin{lemma}
Minimizing preference regret as an internal reward in the advantage computation guides the agent toward preferred trajectories while maintaining the ability to maximize the final reward when preferences are sub-optimal.
\end{lemma}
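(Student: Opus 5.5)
The plan is to make the lemma precise by rewriting the augmented reward of Equation~\ref{eqn:intrew} and reading off what the advantage in Equations~\ref{eqn:delta}--\ref{eqn:adv} rewards. Dropping the entropy term for the moment, the reward is
\begin{equation*}
\mathbf{R}_\tau = r^o_\tau - (r^p_\tau - r^o_\tau) = 2 r^o_\tau - r^p_\tau .
\end{equation*}
Along an imagined rollout, $r^p_\tau$ is fixed by the QMoP-SSM preference prior $\bar{p}_\theta(s^p_\tau \mid s^p_{\tau-1}, g^\theta, h_{\tau-1})$ and the goal $g^\theta$. It does not depend on the policy parameters $\psi$ except through the shared recurrent state $h_\tau$, which I will treat as a stop-gradient or approximate away. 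The first step is to show that, with respect to $\psi$, the policy-gradient objective $\mathcal{L}_{\text{adv}}$ then equals the objective for the return of $2r^o_\tau$, up to a baseline shift $-\sum_\tau\gamma^\tau r^p_\tau$. A baseline that does not depend on the action leaves the policy gradient unbiased. This gives the second half of the lemma directly. The optimal policy for $\mathbf{R}_\tau$ is the optimal policy for the true (predicted) reward $r^o$, with the factor $2$ being harmless. So even when the preferences are sub-optimal, meaning $r^p_\tau < \max r^o_\tau$ along the preference trajectory, the agent is not prevented from exceeding them: every unit gained in $r^o_\tau$ increases $\mathbf{R}_\tau$.

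For the first half (\emph{guidance toward preferred trajectories}), I would use the fact that $r^p_\tau$ does not cancel inside the advantage once it is interpreted per state. The preference reward head shares weights with the representation reward head, and $\mathcal{F}_{\text{dist}}$ aligns $s^p$ with $s^o$ on successful trajectories. I would therefore write $r^o_\tau - r^p_\tau \approx \nabla_s r(s^p_\tau)^\top (s^o_\tau - s^p_\tau)$ via a first-order expansion of the shared reward model. The regret term then acts as a shaping signal that is zero when $s^o_\tau = s^p_\tau$ and penalizes deviations in the directions along which the reward decreases. Combined with the policy's conditioning on $s^p_\tau$ and the expert term $\mathcal{L}_{\text{exp}}$, this pushes imagined trajectories toward the preference trajectory whenever that trajectory is at least as rewarding. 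In particular, when $r^p_\tau \ge r^o_\tau$, the regret is positive and lowers $\delta_\tau$, so actions that drift away from preferences receive negative advantage. I would then check that the entropy term $\alpha\mathbf{H}[\cdot]$, with $\alpha \approx 0.0003$, only adds a bounded perturbation of order $\alpha$ to the advantage and does not change the argument.

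The main obstacle is that the two halves of the lemma pull in opposite directions. If $r^p$ is exactly policy-independent, it is a pure baseline and provides no guidance at all. If it depends on the policy through $h_\tau$ or through the mixture weights, optimality with respect to $r^o$ is no longer exact. I would first handle this by splitting the argument into two regimes. In the regime $r^p_\tau \ge r^o_\tau$ (preferences optimal relative to the current policy), I would argue guidance via the local expansion and the sign of the TD error. In the regime $r^p_\tau < r^o_\tau$ (preferences sub-optimal), the regret becomes a bonus $r^o_\tau - r^p_\tau > 0$ that increases with $r^o_\tau$, so maximizing the advantage still maximizes the final reward. I expect the statement to hold only in this informal, monotonicity sense rather than as a sharp optimality theorem.
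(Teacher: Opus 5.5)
Your two-regime argument at the end is essentially the paper's proof. The paper splits on the sign of $r^p_\tau - r^o_\tau$. Its Case 1 simply \emph{assumes} that deviating from the preference trajectory means $r^p_\tau > r^o_\tau$ and reads the regret as a penalty. Its Case 2 reads the regret as a bonus when $r^p_\tau < r^o_\tau$. It argues nothing beyond these signs.

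What you add is the rewriting $\mathbf{R}_\tau = 2r^o_\tau - r^p_\tau$ together with the baseline observation. This is a genuinely sharper route to the second half. If $r^p$ is statistically independent of the action being scored, the augmented objective has the same maximizer as the $r^o$-objective, up to the factor $2$ and the $O(\alpha)$ entropy term. That says much more than the paper's claim that a positive bonus increases the advantage. Two caveats on this step. The condition you need is that independence, not a stop-gradient, because $\mathcal{A}_\tau$ already sits under $\text{sg}$. And it holds only approximately, because $r^p_{\tau+k}$ is read out through $h_{\tau+k}$, which for $k \ge 1$ depends on $a_\tau$.

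The same observation exposes what the paper's Case 1 glosses over. The ``penalty for deviation'' is only ever a penalty for lower $r^o_\tau$. So the guidance it provides is reward maximization plus the hypothesis that preferred states carry the highest predicted reward. Deviations from $s^p_\tau$ that leave the predicted reward unchanged are not penalized. Any genuine state-matching comes from the policy's conditioning on $s^p_\tau$ and from $\mathcal{L}_{\text{exp}}$, both of which lie outside the lemma.

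Your first-order expansion is unnecessary and does not do the job you assign it in Case 1. If $s^p_\tau$ maximizes the shared reward head, then $\nabla_s r(s^p_\tau) = 0$ and the linear term vanishes identically. If it does not, the linear term is a constant-slope tilt: it penalizes deviations in descent directions but rewards equal deviations in ascent directions. That is not a restoring force toward $s^p_\tau$; it is really your Case 2 behaviour. Drop the expansion and take the paper's sign hypothesis as the content of ``high-quality preferences.''
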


\begin{proof}

We divide the proof into two cases. 

\noindent 
\textbf{Case 1: High-quality Preferences.} Assume that the reward model predicts high reward for preference states. When the agent’s imagined trajectory deviates from the predicted preference trajectory, $r^p_\tau > r^o_\tau$, it yields negative preference regret $-(r^p_\tau - r^o_\tau) $. The internal reward $\mathbf{R}_\tau$ is then penalized for deviation from the preference. Conversely, when the imagined trajectory aligns with the preference, $-(r^p_\tau - r^o_\tau) \rightarrow 0$, maintaining or increasing the advantage value, it reinforces actions that follow the preferred trajectory. 

\noindent 
\textbf{Case 2: Sub-optimal Preferences.} Assume that the preference trajectory yields lower reward than the actual policy, e.g., due to suboptimal expert demonstrations. In this case, $r^p_\tau < r^o_\tau$ results in a positive preference regret ($-(r^p_\tau - r^o_\tau) > 0 $). This contributes to the increased advantage value and thus reinforces the agent to pursue higher environmental reward ($r^o_\tau$) trajectories rather than blindly imitating the expert.

Consequently, this mechanism enables the agent to adaptively balance self / expert-imitation and real-time optimization: avoiding naive behavior cloning -- which suffers from cascading errors due to distributional shift -- while maximizing the final rewards in online learning.
\end{proof}

\begin{table}
    \centering
    \begin{tabular}{|p{4cm}|cccc|}
    \hline
        \textbf{Task/Agent} & \textbf{MYOE} & \textbf{Drm-BC} & \textbf{PPO-BC} & \textbf{LAIfO} \\ \hline
        K Kettle & \cellcolor{LightCyan}$0.97 \pm 0.06$ & $0.90 \pm 0.11$ & $0.01 \pm 0.02$ & $0.27 \pm 0.42$ \\ 
        K Light Switch & \cellcolor{LightCyan}$1.00 \pm 0.01$ & $0.51 \pm 0.49$ & $0.00 \pm 0.00$ & $0.00 \pm 0.00$ \\ 
        K Microwave & \cellcolor{LightCyan}$0.75 \pm 0.43$ & $0.67 \pm 0.34$ & $0.00 \pm 0.00$ & $0.00 \pm 0.00$ \\ 
        K Slide Cabinet & \cellcolor{LightCyan}$0.99 \pm 0.03$ & $0.74 \pm 0.43$ & $0.28 \pm 0.42$ & $0.57 \pm 0.44$ \\ 
        R Lift & \cellcolor{LightCyan}$1.00 \pm 0.01$ & $0.62 \pm 0.26$ & $0.00 \pm 0.00$ & $0.00 \pm 0.01$ \\ 
        R Door & \cellcolor{LightCyan}$0.99 \pm 0.02$ & $0.98 \pm 0.03$ & $0.00 \pm 0.00$ & $0.00 \pm 0.00$ \\ 
        M Button Press & \cellcolor{LightCyan}$1.00 \pm 0.00$ & $0.25 \pm 0.43$ & $0.04 \pm 0.05$ & $0.00 \pm 0.00$ \\ 
        M Button Press Wall & \cellcolor{LightCyan}$1.00 \pm 0.00$ & $0.00 \pm 0.00$ & $0.00 \pm 0.00$ & $0.00 \pm 0.00$ \\ 
        M Coffee Button & \cellcolor{LightCyan}$1.00 \pm 0.01$ & $0.99 \pm 0.03$ & $0.07 \pm 0.12$ & $0.54 \pm 0.36$ \\
        M Coffee Pull & \cellcolor{LightCyan}$0.77 \pm 0.13$ & $0.00 \pm 0.00$ & $0.00 \pm 0.00$ & $0.00 \pm 0.00$ \\ 
        M Coffee Push & \cellcolor{LightCyan}$0.85 \pm 0.10$ & $0.00 \pm 0.00$ & $0.00 \pm 0.00$ & $0.01 \pm 0.02$ \\ 
        M Door Close & \cellcolor{LightCyan}$1.00 \pm 0.00$ & \cellcolor{LightCyan}$1.00 \pm 0.00$ & $0.25 \pm 0.43$ & $0.25 \pm 0.43$ \\ 
        M Door Lock & \cellcolor{LightCyan}$1.00 \pm 0.02$ & $0.98 \pm 0.04$ & $0.11 \pm 0.13$ & $0.06 \pm 0.11$ \\
        M Door Unlock & \cellcolor{LightCyan}$1.00 \pm 0.00$ & \cellcolor{LightCyan}$1.00 \pm 0.00$ & $0.00 \pm 0.00$ & $0.00 \pm 0.00$ \\ 
        M Drawer Close & \cellcolor{LightCyan}$1.00 \pm 0.00$ & \cellcolor{LightCyan}$1.00 \pm 0.00$ & $0.95 \pm 0.09$ & \cellcolor{LightCyan}$1.00 \pm 0.00$ \\ 
        M Drawer Open & \cellcolor{LightCyan}$1.00 \pm 0.00$ & \cellcolor{LightCyan}$1.00 \pm 0.00$ & $0.01 \pm 0.03$ & $0.02 \pm 0.04$ \\ 
        M Faucet Close & \cellcolor{LightCyan}$1.00 \pm 0.02$ & \cellcolor{LightCyan}$1.00 \pm 0.00$ & $0.00 \pm 0.00$ & $0.00 \pm 0.00$ \\ 
        M Handle Press & \cellcolor{LightCyan}$1.00 \pm 0.00$ & $0.99 \pm 0.02$ & $0.30 \pm 0.23$ & $0.24 \pm 0.19$ \\ 
        M Handle Pull Side & \cellcolor{LightCyan}$1.00 \pm 0.00$ & $0.00 \pm 0.00$ & $0.00 \pm 0.00$ & $0.00 \pm 0.00$ \\ 
        M Handle Pull & \cellcolor{LightCyan}$1.00 \pm 0.00$ & $0.49 \pm 0.49$ & $0.02 \pm 0.04$ & $0.04 \pm 0.06$ \\ 
        M Peg Insert Side & \cellcolor{LightCyan}$0.08 \pm 0.14$ & $0.00 \pm 0.00$ & $0.00 \pm 0.00$ & $0.00 \pm 0.00$ \\ 
        M Reach & \cellcolor{LightCyan}$0.36 \pm 0.17$ & $0.02 \pm 0.04$ & $0.00 \pm 0.00$ & $0.00 \pm 0.00$ \\ 
        M Plate Slide & \cellcolor{LightCyan}$0.97 \pm 0.05$ & $0.00 \pm 0.00$ & $0.02 \pm 0.04$ & $0.18 \pm 0.33$ \\ 
        M Plate Slide Back Side & \cellcolor{LightCyan}$0.75 \pm 0.43$ & $0.00 \pm 0.00$ & $0.00 \pm 0.00$ & $0.00 \pm 0.00$ \\ 
        M Plate Slide Side & \cellcolor{LightCyan}$1.00 \pm 0.01$ & $0.99 \pm 0.02$ & $0.12 \pm 0.21$ & $0.00 \pm 0.00$ \\
        M Plate Slide Back & \cellcolor{LightCyan}$1.00 \pm 0.00$ & $0.46 \pm 0.46$ & $0.00 \pm 0.00$ & $0.00 \pm 0.00$ \\ 
        M Peg Unplug Side & \cellcolor{LightCyan}$1.00 \pm 0.01$ & $0.00 \pm 0.00$ & $0.00 \pm 0.01$ & $0.02 \pm 0.04$ \\ 
        M Soccer & \cellcolor{LightCyan}$0.30 \pm 0.11$ & $0.00 \pm 0.00$ & $0.00 \pm 0.01$ & $0.00 \pm 0.00$ \\ 
        M Reach Wall & \cellcolor{LightCyan}$0.94 \pm 0.10$ & $0.00 \pm 0.00$ & $0.00 \pm 0.00$ & $0.00 \pm 0.00$ \\ 
        M Sweep Into & \cellcolor{LightCyan}$0.47 \pm 0.17$ & $0.00 \pm 0.01$ & $0.00 \pm 0.00$ & $0.00 \pm 0.01$ \\ 
        M Window Open & \cellcolor{LightCyan}$1.00 \pm 0.00$ & \cellcolor{LightCyan}$1.00 \pm 0.00$ & $0.01 \pm 0.02$ & $0.10 \pm 0.18$ \\ 
        M Window Close & \cellcolor{LightCyan}$1.00 \pm 0.00$ & \cellcolor{LightCyan}$1.00 \pm 0.00$ & $0.00 \pm 0.00$ & $0.00 \pm 0.00$ \\ \hline
    \end{tabular}
    \caption{We report the evaluation success rate of the last $100$ episodes of MYOE (ours) as compared to other model/RL baselines. Cyan cells represent the best performing agents within the corresponding task. Note that ``Drm'' is short for ``DreamerV3.''}
    \label{tab:all-stats1}
    \vspace{-8mm}
\end{table}

\begin{table}
    \centering
    \begin{tabular}{|p{4cm}|llll|}
    \hline
        \textbf{Task/Agent} & \textbf{MYOE} & \textbf{MBC} & \textbf{MBC-VAE} & \textbf{MBC-RNN} \\ \hline
        K Kettle & \cellcolor{LightCyan}$0.97 \pm 0.06$ & $0.65 \pm 0.46$ & $0.66 \pm 0.47$ & $0.67 \pm 0.47$ \\ 
        K Light Switch & \cellcolor{LightCyan}$1.00 \pm 0.01$ & $0.23 \pm 0.22$ & $0.12 \pm 0.20$ & $0.24 \pm 0.17$ \\ 
        K Microwave & \cellcolor{LightCyan}$0.75 \pm 0.43$ & $0.50 \pm 0.50$ & $0.49 \pm 0.49$ & $0.50 \pm 0.50$ \\ 
        K Slide Cabinet & \cellcolor{LightCyan}$0.99 \pm 0.03$ & $0.08 \pm 0.12$ & $0.16 \pm 0.17$ & $0.08 \pm 0.15$ \\ 
        R Lift & \cellcolor{LightCyan}$1.00 \pm 0.01$ & $0.38 \pm 0.29$ & $0.70 \pm 0.23$ & $0.46 \pm 0.25$ \\ 
        R Door & \cellcolor{LightCyan}$0.99 \pm 0.02$ & $0.96 \pm 0.06$ & $0.81 \pm 0.26$ & $0.97 \pm 0.04$ \\ 
        M Button Press & \cellcolor{LightCyan}$1.00 \pm 0.00$ & $0.00 \pm 0.00$ & $0.08 \pm 0.10$ & $0.29 \pm 0.24$ \\ 
        M Button Press Wall & \cellcolor{LightCyan}$1.00 \pm 0.00$ & $0.41 \pm 0.41$ & $0.49 \pm 0.42$ & $0.00 \pm 0.00$ \\ 
        M Coffee Button & \cellcolor{LightCyan}$1.00 \pm 0.01$ & $0.62 \pm 0.31$ & $0.82 \pm 0.13$ & $0.15 \pm 0.16$ \\ 
        M Coffee Pull & \cellcolor{LightCyan}$0.77 \pm 0.13$ & $0.10 \pm 0.17$ & $0.05 \pm 0.06$ & $0.07 \pm 0.09$ \\ 
        M Coffee Push & \cellcolor{LightCyan}$0.85 \pm 0.10$ & $0.14 \pm 0.11$ & $0.24 \pm 0.13$ & $0.10 \pm 0.13$ \\ 
        M Door Close & \cellcolor{LightCyan}$1.00 \pm 0.00$ & $0.04 \pm 0.08$ & $0.36 \pm 0.35$ & $0.27 \pm 0.42$ \\ 
        M Door Lock & \cellcolor{LightCyan}$1.00 \pm 0.02$ & $0.21 \pm 0.14$ & $0.57 \pm 0.09$ & $0.36 \pm 0.24$ \\ 
        M Door Unlock & \cellcolor{LightCyan}$1.00 \pm 0.00$ & $0.33 \pm 0.24$ & $0.36 \pm 0.15$ & $0.34 \pm 0.09$ \\ 
        M Drawer Close & \cellcolor{LightCyan}$1.00 \pm 0.00$ & \cellcolor{LightCyan}$1.00 \pm 0.00$ & $0.99 \pm 0.03$ & $0.99 \pm 0.02$ \\ 
        M Drawer Open & \cellcolor{LightCyan}$1.00 \pm 0.00$ & $0.31 \pm 0.34$ & $0.80 \pm 0.17$ & $0.02 \pm 0.04$ \\ 
        M Faucet Close & \cellcolor{LightCyan}$1.00 \pm 0.02$ & $0.71 \pm 0.29$ & $0.77 \pm 0.12$ & $0.72 \pm 0.24$ \\ 
        M Handle Press & \cellcolor{LightCyan}$1.00 \pm 0.00$ & $0.65 \pm 0.20$ & $0.72 \pm 0.26$ & $0.64 \pm 0.15$ \\ 
        M Handle Pull Side & \cellcolor{LightCyan}$1.00 \pm 0.00$ & $0.03 \pm 0.06$ & $0.00 \pm 0.00$ & $0.02 \pm 0.03$ \\ 
        M Handle Pull & \cellcolor{LightCyan}$1.00 \pm 0.00$ & $0.28 \pm 0.22$ & $0.27 \pm 0.18$ & $0.50 \pm 0.26$ \\ 
        M Peg Insert Side & \cellcolor{LightCyan}$0.08 \pm 0.14$ & $0.04 \pm 0.07$ & $0.01 \pm 0.02$ & $0.06 \pm 0.06$ \\ 
        M Reach & \cellcolor{LightCyan}$0.36 \pm 0.17$ & $0.19 \pm 0.13$ & $0.20 \pm 0.13$ & $0.21 \pm 0.23$ \\ 
        M Plate Slide & \cellcolor{LightCyan}$0.97 \pm 0.05$ & $0.00 \pm 0.00$ & $0.08 \pm 0.14$ & $0.16 \pm 0.19$ \\ 
        M Plate Slide Back Side & \cellcolor{LightCyan}$0.75 \pm 0.43$ & $0.24 \pm 0.41$ & $0.66 \pm 0.32$ & $0.49 \pm 0.40$ \\ 
        M Plate Slide Side & \cellcolor{LightCyan}$1.00 \pm 0.01$ & $0.44 \pm 0.45$ & $0.71 \pm 0.22$ & $0.24 \pm 0.41$ \\ 
        M Plate Slide Back & \cellcolor{LightCyan}$1.00 \pm 0.00$ & $0.77 \pm 0.15$ & $0.86 \pm 0.11$ & $0.42 \pm 0.36$ \\ 
        M Peg Unplug Side & \cellcolor{LightCyan}$1.00 \pm 0.01$ & $0.20 \pm 0.14$ & $0.23 \pm 0.23$ & $0.17 \pm 0.18$ \\ 
        M Soccer & \cellcolor{LightCyan}$0.30 \pm 0.11$ & $0.03 \pm 0.04$ & $0.20 \pm 0.11$ & $0.23 \pm 0.15$ \\ 
        M Reach Wall & \cellcolor{LightCyan}$0.94 \pm 0.10$ & $0.18 \pm 0.11$ & $0.09 \pm 0.08$ & $0.15 \pm 0.14$ \\ 
        M Sweep Into & \cellcolor{LightCyan}$0.47 \pm 0.17$ & $0.08 \pm 0.12$ & $0.17 \pm 0.08$ & $0.06 \pm 0.11$ \\ 
        M Window Open & \cellcolor{LightCyan}$1.00 \pm 0.00$ & $0.30 \pm 0.27$ & $0.45 \pm 0.19$ & $0.34 \pm 0.15$ \\ 
        M Window Close & \cellcolor{LightCyan}$1.00 \pm 0.00$ & $0.81 \pm 0.18$ & $0.93 \pm 0.09$ & $0.88 \pm 0.13$ \\ \hline
    \end{tabular}
    \caption{We report the evaluation success rate of the last $100$ episodes of MYOE (ours) in comparison to other baselines. Cyan cells represent the best performance agents within the corresponding task.}
    \label{tab:all-stats2}
    \vspace{-8mm}
\end{table}

\section{Experimental Results}
\label{sec:exp}

\noindent 
\textbf{Baselines.} We seek to demonstrate that our agent outperforms state-of-the-art RLfD baselines in a limited expert data scenario. Specifically, we compare our agent (MYOE) to relevant model-based and model-free RL with IL-augmentation and adversarial IL schemes, namely: 
\textbf{1)} DreamerV3~\cite{Hafner2025DreamerV3} with BC (Drm-BC); 
\textbf{2)} PPO~\cite{Schulman2017PPO, Schulman2015GeneralGAELambda} with BC (PPO-BC); and 
\textbf{3)} PPO with adversarial IL~\cite{Giammarino2024LAIfO, Ho2016GAIL} (LAIfO). Additionally, we also compare to self-supervised IL systems, i.e., BC agents that train on their successful data in addition to the expert data, to demonstrate the robustness of ``preference regret'' optimization over the cascading error. These particular baseline systems includes multimodal BC (MBC), MBC with recurrent neural network (MBC-RNN), and MBC with variational autoencoder~\cite{Kingma2014VAE} (MBC-VAE). All of the agents' total number of parameters are constrained to $\approx 12$ million and each agent trains with the environment for $1$ million steps. For each agent and environment, we conduct $4$ trials and measure performance by calculating the mean and standard deviation of the success rate (SR) from the last $100$ non-learning episodes and their corresponding cumulative rewards.

\noindent 
\textbf{Benchmarks.} We evaluate our proposed agent and relevant baselines on three sets of simulated environments consisting of $32$ tasks in total: 
$4$ Franka Kitchen~\cite{Gupta2020RelayPolicyLearning} (K) tasks, 
$26$ Meta-World~\cite{Yu2020MetaWorld} (M) tasks, and 
$2$ Robosuite~\cite{Zhu2020Robosuite} (R) tasks. 
All of the environments have a sparse reward setting and use visual frames and states as observation, with the Franka Kitchen providing language as an additional modality. For each environment, we collect only $5$ episodes of expert demonstration data to use in the sparse reward setting and all agents / baselines benefit from this data pool (each episode consists of $\approx 250$, $200$, and $1000$ steps in Franka Kitchen, Meta-World, and Robosuite, respectively). All environments have been modified to encourage cascading errors, (i.e., Franka Kitchen adds noise to the robot actions whereas Robosuite and Meta-World change their goal in each episode).

\noindent 
\textbf{Simulation Results.} Overall, our agent converges faster for most tasks in comparison to other baselines; see Figure~\ref{fig:cumrew}. This result demonstrates that our proposed method is capable of utilizing potentially suboptimal and limited demonstrations while still being able to optimize for the task at hand. This is due to the fact that our agent continuously optimizes its preference, which guides its policy toward expert policies when they are optimal and toward overall reward signal values regardless of the expert given that some expert policies might be suboptimal. Baseline RLfD algorithms can also converge in many tasks because of the direct expert action signal; however, some expert policies are not optimal (such as M Window Close), affecting the training performance of the baseline RLfD algorithms; see Figure~\ref{fig:cumrew}, Table~\ref{tab:all-stats1}, and~\ref{tab:all-stats2}. Additionally, sparse reward settings reduce the effectiveness of traditional RLfD algorithms, hindering their learning ability (such as in LAIfO and PPO-BC). 
Finally, we observe that baseline RLfD algorithms and self-supervised IL systems does not optimize at all in some tasks (such as in M Button Press Wall), suggesting that they suffer from cascading errors, resulting in the acquisition of policies that have diverged.


\begin{figure}[!t]
    \centering
    \includegraphics[width=0.8\linewidth]{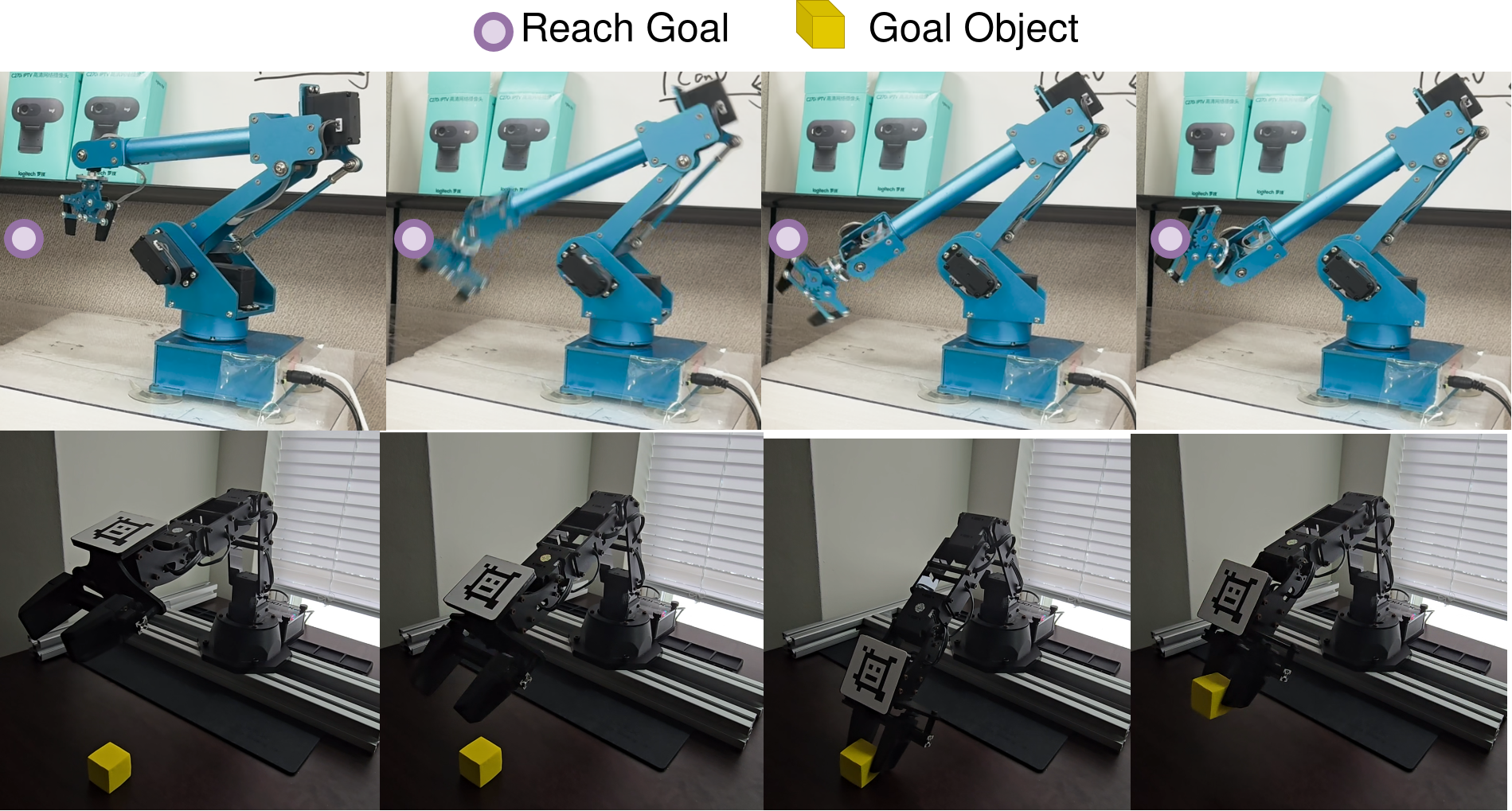}
    \caption{Our proposed MYOE solving the ``reach'' task when integrated into the ``7bot'' robot (top) and our agent model solving the ``block picking'' task when integrated into the PX100 robot (bottom).}
    \label{fig:robots}
    \vspace{-6mm}
\end{figure}


\noindent 
\textbf{Real-World Neurorobot Online Learning.} To further investigate our method's robustness compared to state-of-the-art RLfD baselines, we conducted two real-world online RLfD experiments using the ``7bot''~\cite{7bot20157bot} (6-DOF) and the PX100 robot~\cite{Trossen2018PX100} (4-DOF). The first task requires the ``7bot'' to reach a position given a target joint configuration and the second task requires the PX100 robot to pickup a block. Due to the computational and logistical constraints inherent to real-world in-robot online learning -- including the non-parallelizable nature of physical systems -- we focus our comparison on $4$ representative agents over $4$ trials: our MYOE model, DRM-BC, PPO-BC, and MBC-RNN (for ``7bot'') / LAIfO (for PX100 robot). To simulate realistic learning scenarios with limited expert data, we constrain each agent to train for only $125$ episodes while providing $5$ episodes of demonstrations ($10$ steps per episode). The expert data is recorded to encourage cascading errors (such as shaking the robot arm before picking up the block). We evaluate performance by recording the mean and standard deviation of the task success rate (and task cumulative reward for the block picking task) over the last $20$ episodes. The results show that, \textit{across all tasks, MYOE outperforms all other RLfD and self-supervised IL baselines} (see Table~\ref{tab:robot-results},~\ref{tab:robot-results2}, and Figure~\ref{fig:robots}). This validates that optimizing preference regret provides notable advantages for neurorobotic task optimization, especially in resource-constrained RLfD scenarios where cascading errors is mostly present.

\begin{table}[!t]
    \centering
    \vspace{1mm}
    \begin{tabular}{cccc}
    \textbf{MYOE} & \textbf{Drm-BC} & \textbf{PPO-BC}  & \textbf{MBC-RNN} \\
      \hline
    \cellcolor{LightCyan} 0.34 $\pm$ 0.11  &  0.17 $\pm$ 0.01  & 0.29 $\pm$ 0.11 &  0.00 $\pm$ 0.00 
    \end{tabular}
    \caption{``Reach'' task success rate on the ``7bot''.}
    \label{tab:robot-results}
    \vspace{-10mm}
\end{table}

\begin{table}[!t]
    \centering
    \vspace{1mm}
    \begin{tabular}{cccc}
     \textbf{MYOE} & \textbf{Drm-BC} & \textbf{PPO-BC}  & \textbf{LAIfO} \\
      \hline
    \cellcolor{LightCyan} 0.89 $\pm$ 0.05  &  0.81 $\pm$ 0.05 & 0.78 $\pm$ 0.06 &  0.79 $\pm$ 0.06 \\
    \cellcolor{LightCyan} 80.78 $\pm$ 4.48  &  72.34 $\pm$ 4.93  & 66.78 $\pm$ 6.30 &  67.96 $\pm$ 6.19
    \end{tabular}
    \caption{``Block picking'' task success rate (top) and episodic cumulative reward (bottom) on the PX100 robot.}
    \label{tab:robot-results2}
    \vspace{-6mm}
\end{table}

\section{Conclusions}
\label{sec:conclusions}

In this work, we constructed an RLfD agent framework that optimizes (neuro)robotic task solvers using few expert demonstration samples. We proposed a perception learning paradigm, QMoP-SSM, that makes use of goal learning and preference trajectory estimation. Furthermore, we formulated an agent behavioral learning framework that leverages preference information from the perception model. Specifically, by introducing a concept called ``preference regret'', our agent's policy is capable of learning tasks quickly and accurately as compared to other state-of-the-art model-based and model-free RL (with IL-augmentation and self-supervised IL) systems on both simulation and real-world robot platforms.

\noindent 
\textbf{Limitations and Future Work.} Although MYOE outperforms other state-of-the-art baselines, for more complex tasks such as ``M Soccer,'' our approach's performance is not likely to surpass that of experts. We assume that these scenarios are too difficult (due to small object pixel areas) such that our preference learning and imagination scheme would struggle to effectively utilize the image observation and properly extract useful temporal dependencies within the data. 
Future work should consider improving MYOE's preference trajectory estimation, utilizing different autoregressive models such as deep Kalman filters~\cite{kalman1960filter, Krishnan2015DeepKalmanFilters}. 
Finally, one could considering enhancing our architecture's ability to encode information in its posterior from subtle pixel changes in order to better capture small objects within a (neuro)robot's workspace. 

\bibliographystyle{IEEEtran}
\bibliography{IEEEabrv, main}

\end{document}